\DeclareMathOperator*{\argmin}{argmin}
  \providecommand\BibTeX{{%
    \normalfont B\kern-0.5em{\scshape i\kern-0.25em b}\kern-0.8em\TeX}}}
\def\w{\mathbf w}
\def\x{\mathbf x}
\def\z{\mathbf z}
\def\a{\mathbf a}
\def\w{\mathbf w}
\def\z{\mathbf z}
\def\x{\mathbf x}
\def\a{\mathbf a}
\def\q{\mathbf q}
\begin{document}

\title{Why Deep Learning's Performance Data Are Misleading}

\author{Juyang Weng}
\email{weng@msu.edu}
\orcid{0003-1383-3872}
\affiliation{%
  \institution{Brain-Mind Institute and GENISAMA}
  \streetaddress{4460 Alderwood Dr.}
  \city{Okemos}
  \state{MI}
  \country{USA}
  \postcode{48864}
}

%
%
%
%
%
%

\renewcommand{\shortauthors}{Weng}

\begin{abstract}
 This is a theoretical paper, as a companion paper of the keynote talk at the same conference AIEE 2023.   
 In contrast to conscious learning, many projects in AI have employed so-called ``deep learning'' many of which seemed to give impressive performance.
 This paper explains that such performance data are deceptively inflated due to two misconducts:
 ``data deletion'' and ``test on training set''.   This paper clarifies ``data deletion'' and  ``test on training set'' in deep learning and why they are misconducts.   A simple classification method is defined, called
Nearest Neighbor With Threshold (NNWT).   A theorem is established that the NNWT method 
 reaches a zero error on any validation set and any test set using the two misconducts, as long
 as the test set is in the possession of the author and both the amount of storage space and the time of training are finite but unbounded like with many deep learning methods. 
 However, many deep learning methods, like the NNWT method, are all not generalizable since they have never been tested by a true test set.  Why? The so-called ``test set'' was used in the Post-Selection step of the training stage.  The evidence that misconducts actually took place in many deep learning projects is beyond the scope of this paper.  
 \end{abstract}

\begin{CCSXML}
<ccs2012>
<concept>
<concept_id>10010147</concept_id>
<concept_desc>Computing methodologies</concept_desc>
<concept_significance>500</concept_significance>
</concept>
<concept>
<concept_id>10010147.10010257</concept_id>
<concept_desc>Computing methodologies~Machine learning</concept_desc>
<concept_significance>300</concept_significance>
</concept>
<concept>
<concept_id>10010147.10010257.10010258</concept_id>
<concept_desc>Computing methodologies~Learning paradigms</concept_desc>
<concept_significance>200</concept_significance>
</concept>
<concept>
<concept_id>10010147.10010257.10010258.10010260</concept_id>
<concept_desc>Computing methodologies~Unsupervised learning</concept_desc>
<concept_significance>100</concept_significance>
</concept>
</ccs2012>
\end{CCSXML}

\ccsdesc[500]{Computing methodologies}
\ccsdesc[300]{Machine Learning}
\ccsdesc[200]{Learning Paradigms}
\ccsdesc[100]{Unsupervised Learning}


\maketitle

\section{Introduction}

Since 2012, AI has attracted much attention from public and media.  A large number of projects in AI have published \cite{LeCun15,Bellemare20,Mnih15short,Silver16,Graves16,Silver16,Silver17,Silver18,Moravcik17,Senior20,McKinney20,Schrittwieser20,Senior20,Bellemare20}, including AlphaGo, AlphaGoZero, AlphaZero, AlphaFold, and IBM Debater. 
.  
If the authors of these projects understand the principles in this report, they could benefit much for reducing the time and manpower to reach their target systems as well as improving the generalization powers of their target systems.

So-called ``Deep Learning'' has two steps in the training stage \cite{WengPSUTS21}.
The training stage has two steps:  First, fit multiple systems each starting with random weights 
(and try many hyper-parameters) using a training set.  Second, Post-Select the luckiest fit system based on validation set and test set.   Therefore, so-called ``Deep Learning'' is without any test stage. 

This paper, based on the analysis of Post-Selections \cite{WengPSUTS21}, raises two flaws that 
seem to widely exist in machine learning projects:  (1) data deletion and (2) test on training data.
The latter is applicable when test data are in the possession of the authors.   All authors of published papers that report authors' own tests are in the possession of the test sets.   

An open-competition is different, such as
Deep Blue versus Garry Kasparov Feb. 10, 1996 - May 11, 1997,  AlphaGo versus Lee Sedol March 9, 2016 - March 15, 2026, and AlphaGo versus Ke Jie May 23, 2007 - May 27, 2017, because the test data arrives on the fly.   This paper defines Post-Selections Using Test Set (PSUTS) On The Fly (OTF) conducted by humans behind the scene.  The author does not claim that PSUTS OTF indeed took place during 
any of these three events (although Kasparov did).   The PSUTS OTF mode is for the academic community to be aware of, to be alert about, and to investigate in the future.

In the remainder of the paper, we will discuss four learning conditions in 
Sec.~\ref{SE:Conditions} from which we can see that we cannot just look at superficial ``errors'' without limiting resources. 
Sec.~\ref{SE:4Maps} discusses four types of mappings for a learner, which gives spaces on which we can 
discuss errors. 
Post-Selections are discussed in Sec~\ref{SE:Post}.  Section~\ref{SE:conclusions} provides concluding remarks.

\section{The Four Learning  Conditions}
\label{SE:Conditions}
First, let us consider four learning conditions that any fair comparisons of AI methods should take into account. 

Many AI methods were evaluated without considering how much computational recourses are 
necessary for the development of a reported system.   Thus, comparisons about the performance of the system have been tilted toward competitions about how much resources a group has at its disposal, regardless how many networks have been trained and discarded, and how much time the training takes.  

Here we explicitly define the Four Learning Conditions for development of an AI system:

\begin{definition}[The Four Learning  Conditions]
\label{DF:Conditions}
The Four Learning  Conditions for developing an AI system are:  (1) A body including sensors and effectors, (2) a set of restrictions of learning framework, including whether task-specific or task-nonspecific, batch learning or incremental learning; (3) a training experience and (4) a limited amount of computational resources including the number of hidden neurons. 
\end{definition}

For example, the ImageNet competition \cite{Russakovsky15} did not seem to explicitly restrict conditions 
(2), (3) and (4).  The given images and class labels correspond to restriction in condition (1).
The AIML Contests \cite{WengAAAIFS18} considered all the four in performance evaluation.   

Weng 2021 \cite{WengPSUTS21} discussed the conditions (2) to (4) without condition (1).
It further discussed why any Big Data set violates what is called the sensorimotor recurrence principle, namely, any learning process that uses a static Big Data set is physically flawed. 

\section{Four Different Mappings}
\label{SE:4Maps}

Traditionally, a neural network is meant to establish a mapping $f$ from the space of input $X$ to the space of
class labels $L$, 
\begin{equation}
f:X \mapsto L
\label{EQ:XtoLmapping}
\end{equation}
\cite{Funahashi89,Poggio90a}.   $X$ may contain a few time frames.  

Many temporal problems, such as video analysis problems, speech recognition problems, and computer game-play problems, can include context labels in the input space, so as to learn 
\begin{equation}
f:X \times L \mapsto L.
\label{EQ:XLtoLmapping}
\end{equation}
where $\times$ denotes the Cartesian product of sets. 

A developmental approach deals with space and time in a unified fashion using a neural network such as Developmental Networks (DNs) 
\cite{WengWhy11} 
whose experimental embodiments range from WWN-1 to WWN-9.  The DNs went beyond vision problems to attack general AI problems including vision, audition, and natural language acquisition as emergent Turing machines \cite{WengIJIS15}.
DNs overcame the limitations of the framewise mapping in Eq.~\eqref{EQ:XLtoLmapping} by dealing with lifetime mapping without using any symbolic labels:
 \begin{equation}
f: X(t-1)\times Z(t-1) \mapsto  Z(t), t=1, 2, ... 
\label{EQ:XZmapping}
\end{equation}
where $X(t)$ and $Z(t)$ are the sensory input space and motor input-output space, respectively.

Note that $Z(t-1)$ here is extremely important since it corresponds to the state of a Turing machine.  Namely, all the errors occurred during any time of each life is 
recorded and taken into account in the performance evaluation.   Different from the space mapping in Eq.~\eqref{EQ:XtoLmapping} and very important, the space $Z(t)$ is the directly teachable space for the learning system, inspired by brains \cite{Super76,Thoroughman,Rizzolatti87,Moore03,Thoroughman,Iverson10}.  
This new formulation is meant to model not only brain's spatial processing \cite{WengSpace12}
and temporal processing \cite{WengTime}, but also Autonomous Programming for General Purposes (APFPG)  \cite{WengIJCNN2020,WengIJHR2020}.    Based on the APFGP capability, the AI field seems to have a powerful yet general-purpose framework towards conscious machines \cite{WengCAI-ICDL20}.
 
We need to consider two factors: (A) Space:  Because $X$ and $Z$ are vector spaces of sensory images and muscle neurons, we need internal 
neuronal feature space $Y$ to deal with sub-vectors in $X$ and $Y$ and their spatial hierarchical features.  (B) Time:  Furthermore, considering symbolic 
Markov models, we also need further to model how $Y$-to-$Y$ connections
enable something similar to higher and dynamic order of time in Markov models.  With the two considerations (A) Space and (B) Time, the above lifetime mapping in Eq.~\eqref{EQ:XZmapping} is extended to:
\begin{equation}
f: X(t-1)\times Y(t-1) \times Z(t-1) \mapsto  Y(t) \times Z(t), t=1, 2, ... 
\label{EQ:XYZmapping}
\end{equation} 
in DN-2.  It is worth noting that the $Y$ space is inside a closed ``skull'' so it cannot be directly supervised. 
$Z(t-1)$ here is extremely important since it corresponds to the state of an emergent Turing machine.  

Asim Roy \cite{Roy08} argued that there are some parts of the brain that control other parts.  Here the area $Z$ could be treated as a regulatory ``controller'' that regulates hidden neurons in the $Y$, e.g., as sensorimotor rules.  In neuroscience, there have been many published models that handcraft areas as top-down regulators.  In the DN model below, such areas must be automatically generated and refined ML-optimally inside the closed skull across lifetime.

In terms of performance evaluation, all the errors occurred during any time in Eq.~\eqref{EQ:XYZmapping} of each life is 
recorded and taken into account in the performance evaluation.  This is in sharp contrast with Post-Selection.    

\section{Post-Selections}
\label{SE:Post}
Before we discuss Post-Selections, we need to discuss three types of errors.

\subsection{Fitting, Validation and Test Errors}

Given an available data set $D$, $D$ is divided by a partition into three mutually disjoint sets, a fitting set $F$, a validation set $V$, and a test set $T$ so that
\begin{equation}
D=F\cup V \cup T.
\label{EQ:disjoint}
\end{equation}
Two sets are disjoint if they do not share any elements.   
The validation set is possessed by the trainer, the test set should not be possessed by the trainer since the test should be conducted by an independent agency.  Otherwise, $V$ and $T$ become equivalent. 

Given any hyper-parameter vector $\a_i$ (e.g., including receptive fields of neurons), it is unlikely that a single network initialized by a set of random weight vectors can 
result in an acceptable error rate on the fitting set, called fitting error, that the error-backprop  training intends to minimize locally.
That is how the multiple sets of random weight hyper-parameter vectors come in.  For $k$ hyper-parameter vectors $\a_i$, $i=1, 2, ... k$ and $n$ sets of random initial weight vectors $\w_j$, the error back-prop training results in $kn$ networks 
\[
\{N(\a_i, \w_j) \;|\; i=1, 2, ... , k, j=1, 2, ..., n\} .
\]
Error-backprop locally and numerically minimizes the fitting error $f_{i,j}$ on the fitting set $F$.  

\begin{figure}[h]
	\centering
	\includegraphics[width=\linewidth]{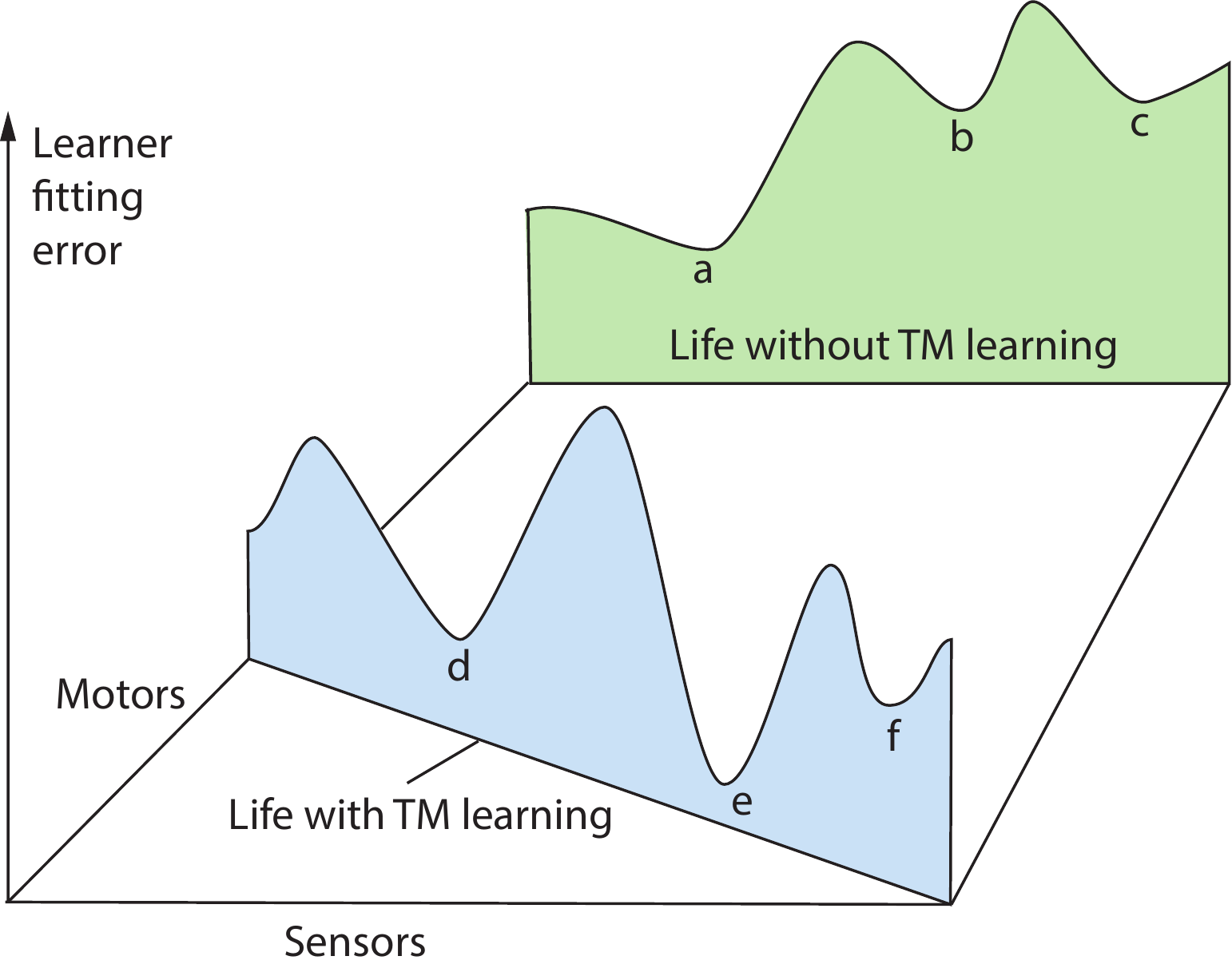}
	\caption{A 2D-terrain illustration for the global minima problems in a high-dimensional terrain (e.g., 200B-dimensional space of hyper parameters and weights).  TM: Turing machine.}
	\label{FG:PSUTS-1D}
\end{figure}

Fig.~\ref{FG:PSUTS-1D} gives a 2D illustration for the limitations of Post-Selection as well as the difference between the sensor-only mapping in Eq.~\eqref{EQ:XtoLmapping} and the sensorimotor mapping in Eq.~\eqref{EQ:XZmapping}.  Suppose $X$ represents the space of sensory input and $Z$ represents the
space of motor input, but in general should be any initial pair $(\a_i, \w_j)$.  Each location on the 2D plane of Eq.~\eqref{EQ:XtoLmapping} corresponds to 
the initial random weights of a neural network (e.g., 200B-dimensional).   Of course, each network has 
many neurons not just two input values but Fig.~\ref{FG:PSUTS-1D} can only schematically 
represent the initial weights of two values as a 2D-terrian illustration.    
The height of a curve at is the system fitting error $e_{i,j}$ of a particular trained network $N(\a_i, \w_j)$.
We use the biological term ``life'' to indicate the entire process of a system's learning.  

The error-backprop learning method is a greedy method.  Starting from any initial pair $(\a_i, \w_j)$ it steps 
along the direction that descends the quickest without knowing where the global minimum is in the
200B-dimensional space.   In Fig.~\ref{FG:PSUTS-1D}, we can see that $(\a_i, \w_j)$ leads to a local minimum $a$, $b$, or $c$ if we use the sensory mapping in Eq.~\eqref{EQ:XtoLmapping}.   The point $a$ is the lowest point, but there is no guarantee to reach it, depending on where the network starts from in the 200B-dimensional space.  The more networks have been trained, the more likely the luckiest network 
finds the global minimum.   In Fig.~\ref{FG:PSUTS-1D}, the luckiest network starts from the valley 
where $a$ is located, but this is much harder for the 200B-dimensional space.  Typically, the more networks a project has trained, the more likely for the Post-Selection stage to find a network with a smaller $e_{i^*,j^*}$.

Graves et al. \cite{Graves16} seems to have mentioned that the number of trained systems is at least $n=20$.  Saggio et al. \cite{Saggio21} reported that $n$ is at least $10,000$.  Krizhevsky \& Hinton \cite{Krizhevsky17} did not give $n$ but seems to have mentioned 60 million parameters which probably means each $\w_i$ and each $\a_j$ combined to be of
60 million dimensional.   Consider a small example:  The number of tried value $l$ for each hyper-parameter: $l=3$, and the number of hyper-parameters $d=10$, the total number of hyper-parameter vectors is $k=l^d=3^{10}=59049$.  Letting $n=20$, $kn=l^dn \approx1$M networks must be trained, a huge number that requires a lot of computational resources to do number crunching and a lot of manpower to manually tune the range of hyper-parameters!

\begin{definition}[Distribution of fitting, validation and test errors]
The distributions of all $kn$ trained networks' fitting errors $\{f_{ij}\}$, validation errors $\{e_{ij}\}$, and test errors $\{e'_{ij}\}$, $i=1, 2, ... k$, $j=1, 2, ... n$ are random distributions depending on a specific data set $D$ and
its partition $D=F\cup V\cup T$.  The difference between a validation error and a test error is that the former is computed from the same author using 
an author-possessed validation set $V$ but the latter is computed by an independent agency using an 
author-unknown test set $T$.
\end{definition}

We define a simple system that is easy to understand for our discussion to follow. 
Consider a highly specific task of recognizing patterns inside the annotated windows in Fig.~\ref{FG:ImageNet-Annotation}.  This is a simplified case of the three tasks---recognition (yes or no, learned patterns at varied locations and scales), detection (presence of, or not, learned patterns) and segmentation (of recognized patterns from input).  These three tasks of natural cluttered scenes were dealt with by  
the first deep learning networks for 3D---Cresceptron~\cite{WengCresIJCV97}.  Later data sets like ImageNet \cite{Russakovsky15} contain many more image samples.
\begin{figure}[tb]
	\centering
	\includegraphics[width=\linewidth]{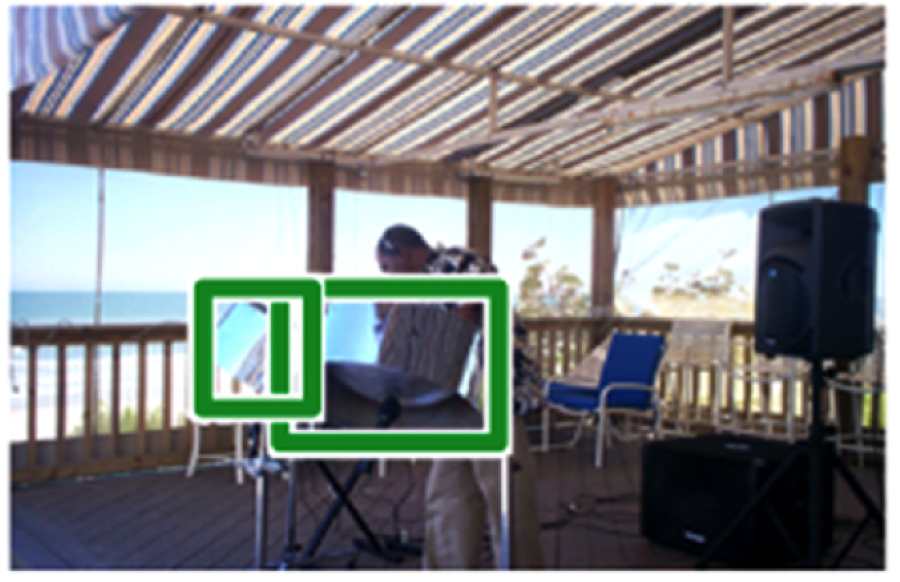}
	\caption{
	Two annotated windows for an object class labeled as ``steel drum'' for single object localization.  Figure courtesy of \cite{Russakovsky15}.
	}
	\label{FG:ImageNet-Annotation}
\end{figure}

\subsection{Post-Selection Illusion}
To understand Post-Selection in Deep Learning (using CNN, LSTM, etc.), including supervised learning, reinforcement learning and adversary learning, let use consider a simple classifier that shares the same principle of Post-Selection but does not have the distraction of details in more commonly used networks as well as learning modes. 

\begin{definition}[Nearest neighbor with threshold, NNWT]
\label{DF:NN}
Define a network that stores the entire fitting set $F$ where each image in $F$ may contain multiple annotated windows for matching (see Fig.~\ref{FG:ImageNet-Annotation}).  For each input image $q$, a scan subwindow $x$ searches across the input image $q$ for a range of locations and scales, normalizes the scale, and compares with each annotated window in the fitting set $F$.   Suppose an input window $x$ matches the nearest sample (annotated) window $s$ in $F$.  If the distance between $x$ and $s$ is not larger than a distance threshold $d$ (a hyper-parameter), then 
the network outputs the associated label of the nearest sample $s$. 
Otherwise, the system outputs a label randomly and uniformly sampled from the label set $L$.
\end{definition}  

Let us give some formality of the Post-Selection with NNWT.
Suppose that there are three data sets, fitting set $F$, verification set $V$, and 
test set $T$, a desired verification error $e_v \ge  0$, and a desired test error $e_t \ge 0$, run the following program $P(s)$ that starts from random seed $s$.  
\begin{enumerate}
\item Store all data from the fitting set $F$. 
\item For each receptive field (e.g., rectangular) $\q$ inside of a cluttered image or a cluttered video $\x \in V$, or $\x \in T$, compute the distance $d_i=d(\q, \x_i)$ between receptive field $\q$ and each annotated receptive field $\x_i$ in $F$ for all values of $i$ (each sample in $F$ have multiple annotated receptive field $\x_i$'s).  
\item Produce the class label $l_j$ of the nearest neighbor $\x_j$ that produces the smallest distance: $j =\argmin_{i} \{ d_i \}$. 
\item If $d_j \le d(s)$ where $d(s)$ is a threshold depending on random seed $s$, output the label  $l_j$ of the nearest neighbor.  Otherwise, output a label that is randomly and uniformly sampled from the label set $L$.
\end{enumerate}

Post-Selection stage:  If the measured verification error or the test error is larger than required, do the above procedure $P(s)$ for a new seed $s$.  Repeat until the measured verification error and test error are both not larger than the required $e_v $ and $e_t$, respectively.  

In order words, the NNWT classifier with Post-Selections uses a lot of space and time resources for over-fitting.  It has a perfect fitting error (zero) but it randomly guesses an output label if the distance is larger than the threshold $d$.

To understand why Post-Selections give misleading results, let us derive the following important theorem.
\begin{theorem}[Post-Selection Illusion]
\label{TM:illusion}
Given any validation error rate $e_v \ge 0$ and test error rate $e_t \ge 0$, using Post-Selections the NNWT classifier using Post-Selections satisfies the $e_v$ and $e_t$, if the author is in the possession of the test set and both the storage space and the time spent on the Post-Selections are finite but unbounded.
\end{theorem}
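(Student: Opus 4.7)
The plan is to exploit the genuinely random branch of $P(s)$ (the uniform draw from $L$) in order to turn Post-Selection into an elementary geometric-trials argument. First, I would fix a convention for the seed-dependent threshold so that the nearest-neighbor branch is suppressed on the entire set $V\cup T$. Concretely, taking $d(s)<0$ for the seeds we consider forces the comparison $d_j\le d(s)$ in step~4 of $P(s)$ to fail for every query in $V$ and $T$, because distances are nonnegative. Thus, for every query in $V\cup T$, $P(s)$ outputs a label drawn uniformly from the finite label set $L$.

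Second, I would compute the per-trial success probability. For each query the uniform guess agrees with the true label with probability $1/|L|$, and across the $|V|+|T|$ queries these events are independent because $P(s)$ uses fresh random bits drawn from $s$. Write $N_V(s)$ and $N_T(s)$ for the empirical error counts produced by a single run. The worst case for the desired event $\{N_V(s)\le e_v|V|\}\cap\{N_T(s)\le e_t|T|\}$ is $e_v=e_t=0$, where we need every guess to be correct; that event has probability exactly $|L|^{-(|V|+|T|)}$, which is strictly positive. For any $e_v,e_t\ge 0$ the probability can only increase, so the per-trial success probability $p$ satisfies $0<p\le 1$.

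Third, I would invoke independence across trials. Each iteration of the Post-Selection loop restarts $P(s)$ with a fresh seed, so the trials form an i.i.d.\ Bernoulli sequence with parameter at least $p$. The number of trials until the first success is therefore geometrically distributed with mean $1/p<\infty$, and with probability one a successful seed $s^*$ is reached in finitely many steps. Storage per trial is $|F|$ plus bookkeeping for the current seed and the matched annotations, hence finite; total time is the number of trials times the per-trial cost, again finite on every sample path but with no a priori upper bound, which is exactly what ``finite but unbounded'' in the hypothesis allows. Because the author is in possession of $T$, the stopping criterion $N_T(s)\le e_t|T|$ is actually checkable by the author, which is precisely the ``test on the test set'' misconduct that makes the scheme effective; without it, the Post-Selection loop could not even know when to halt.

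The only delicate point, rather than an obstacle, is guaranteeing that the nearest-neighbor branch cannot silently commit an unerasable error, for example when some $\q\in V\cup T$ coincides exactly with an annotated sample in $F$ whose stored label disagrees with the ground truth. This is the reason I let $d(s)$ take negative values: the statement of the theorem places no sign restriction on the threshold, and with $d(s)<0$ the nearest-neighbor branch is unconditionally disabled on $V\cup T$, reducing the argument to the clean independent-guessing calculation above. The remaining work is purely bookkeeping of the Bernoulli and geometric estimates.
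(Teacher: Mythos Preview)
Your proposal is correct and follows essentially the same argument as the paper: both reduce to independent uniform guesses over $L$ and then use the geometric-trials bound $(1-1/l^k)^n\to 0$ to conclude that a successful seed appears in finite time. The only cosmetic difference is that you force the pure-lottery regime by taking $d(s)<0$, whereas the paper implicitly works with a small threshold and lets $k$ denote the number of queries that fall into the guessing branch; your choice is a bit cleaner because it rules out the edge case of an exact match in $F$ triggering the nearest-neighbor branch.
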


\begin{proof}
Because the number of seeds to be tried 
during the Post-Selection is unlimited, we can prove that there is a finite time at which a lucky seed $d(s)$ will 
produce the good enough verification error and test error.  
Although the waiting time is long, the time is finite because $V$ and $T$ are finite.  Let us formally prove this.  Suppose $l$ is the number of labels in the output set $L$ and for the set of queries in $V$ and $T$, there are $k$ outputs that must be guessed.  The probability for a single guess to be correct is $1/l$ due to the uniform sampling in $L$.  The probability for $k$ guesses to be all correct 
is  $(1/l)^k=1/l^k$ because guesses are independent.  For an independently and randomly initialized network to guess less than $k$ cases correct is 
$1- 1/l^k$, with $0< 1- 1/l^k< 1$.  The probability for as many as $n$ independently and randomly initialized networks, all of which do not satisfy the $e_v $ and $e_t$, is
\[
p(n) = (1- 1/l^k)^n \longrightarrow 0, \mbox{as $n \longrightarrow \infty$}
\]
because $0< 1- 1/l^k< 1$.  Therefore, within finite time span, a process of trying incrementally more networks will get a lucky network that satisfies both the computable $e_v $ and $e_t$. 
\end{proof}

As we can see from the proof, the smaller the threshold, the more guesses must be made and, thus, typically the longer time one needs to spend during Post-Selections.

Let us imagine that the threshold  $d(s)$ gradually increases from zero toward infinity.  The nearest neighbor classifier changes from a total-lottery scheme (when $d(s) = 0$) to a traditional nearest neighbor classifier (without label guesses) when all query inputs are beyond the threshold. 

Theorem~\ref{TM:illusion} has established that Post-Selections can even produce a perfect classifier that gives a zero validation error and a zero test error! Yes, while the test sets are in the possession of authors, the authors could show any superficially impressive validation error rates and test error rates (including zeros!) because they used Post-Selections without a limit on resources (to store all data sets and to search for a lucky network).

The above theorem means any comparisons without an explicit limit on storage and time spent on Post-Selections are meaningless, like ImageNet and many other competitions.
It is of course time consuming for a program to search for a network whose guessed labels are 
good enough. But such a lucky network will eventually come within a finite time!     


\begin{corollary}[Misleading AI papers]
If a paper trains more than one system and the author is in the possession of test set, the performance data from the paper are misleading if the paper does not report the number of systems trained in Post-Selections, the amount of computational resources (e.g., the amount of storage, the number of computations), the amount of waiting time, along with the fitting errors, the validation errors, and the test errors of {\em all} trained systems.  The generalization power of the reported system is still unknown. 
\end{corollary}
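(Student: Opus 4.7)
The plan is to derive the corollary directly from Theorem~\ref{TM:illusion} by a contrapositive-style argument: if the missing information is not reported, then the reported numbers cannot be distinguished from those achievable by pure Post-Selection luck, so they carry no evidential weight about the learner itself. First I would formalize what ``misleading'' means here: a reader, given only the reported fitting, validation and test errors, should be able to assign those numbers to properties of the trained system rather than to properties of the selection procedure. The corollary then amounts to the claim that under the stated omissions, no such attribution is possible.

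The core step is to apply Theorem~\ref{TM:illusion} as a reduction. Fix any target triple $(e_f, e_v, e_t)$ of reported errors with $e_f, e_v, e_t \ge 0$. The theorem guarantees that the NNWT scheme, which has zero fitting error by construction and arbitrarily small validation and test errors under Post-Selection, already meets this triple, provided only that storage and selection time are finite but unbounded and that the test set is in the author's possession --- precisely the regime the corollary assumes. Consequently, for any reported $(e_f, e_v, e_t)$ there exists a trivial ``method'' (NNWT plus seed search) that produces the same numbers without any genuine learning. Unless the paper bounds the number $kn$ of trained networks, the storage, the compute, and the wall-clock spent in Post-Selection, and unless it exhibits the full distributions $\{f_{ij}\}$, $\{e_{ij}\}$, $\{e'_{ij}\}$ over all trained systems, the reader cannot rule out that the reported numbers live in this NNWT-like regime. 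Hence the reported performance is consistent with ``nothing was learned,'' which is the sense in which it is misleading.

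For the second half of the statement --- that generalization power remains unknown --- I would argue directly from the definitions in Sec.~\ref{SE:Post}. Once the test set $T$ is used to choose among the $kn$ candidates $N(\a_i, \w_j)$, the selected index $(i^*, j^*)$ is a function of $T$, so $T$ has entered the training pipeline and has ceased to be an independent test set in the sense of Eq.~\eqref{EQ:disjoint}; $V$ and $T$ collapse to a single author-possessed set, exactly the degeneracy flagged in the definition of fitting, validation and test errors. Therefore no error computed on $T$ after Post-Selection estimates the true risk on fresh data, and the generalization power of the chosen system is undetermined by the reported numbers.

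The main obstacle I expect is not technical but definitional: ``misleading'' is not a mathematical predicate, so the proof has to fix a precise reading (here, ``not distinguishable by the reader from the NNWT baseline of Theorem~\ref{TM:illusion}'') before the reduction goes through. Once that reading is pinned down, the rest is a one-line invocation of the theorem plus the observation that Post-Selection on $T$ contaminates $T$; no further calculation is needed.
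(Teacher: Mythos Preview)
Your proposal is correct and follows essentially the same route as the paper: both invoke Theorem~\ref{TM:illusion} to argue that, absent resource bounds and the full error distributions, the reported numbers are indistinguishable from what NNWT-plus-Post-Selection could produce and hence are misleading, and both conclude that generalization is unknown because $T$ has effectively been consumed by the selection step. Your version is considerably more careful than the paper's two-sentence proof --- in particular your explicit formalization of ``misleading'' and your $T$-contamination argument (which the paper states separately as the ``Test on Training Set'' theorem immediately afterward) make the logic tighter --- but the underlying approach is the same.
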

\begin{proof}
From Theorem~\ref{TM:illusion}, if Post-Selections are allowed, a NNWT can satisfy any nonzero validation error and any nonzero testing error using Post-Selections, since the training set, validation set and the test set are all in the possession of the authors.  The generalization power of all trained systems is unknown, so is the reported luckiest system. 
\end{proof}

This corollary is a scientific basis for the author to raise a violation of protocols and a lack of transparency about the Post-Selection stage in almost all machine learning papers appeared in {\em Nature}, {\em Science}, Communication of ACM \cite{Bengio21} and other publication venues since around 2015. Since all the fitting sets, validation sets and test sets are in the possession of the authors, many papers have claimed misleading results using a flawed protocol.  

Following the terminology of \cite{DudaHartStork}, we use the term ``Test on Training Set'' below. 
\begin{theorem}[Test on Training Set]
Post Selections that used test set amount to tests on training set even though trained networks did not ``see'' the training set during their error backprop training.
\end{theorem}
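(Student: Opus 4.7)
The plan is to treat the complete Deep Learning pipeline as a single learning procedure and to identify precisely which data determine the deployed system. First I would isolate the two stages already introduced in the paper: Stage~1 runs error-backprop on the fitting set $F$, producing a family $\{N(\a_i,\w_j) \mid i=1,\dots,k,\ j=1,\dots,n\}$; Stage~2 picks an index $(i^*,j^*)$ by minimizing a criterion computed on the validation set $V$ and/or the test set $T$. The crucial observation is that the deployed network $N(\a_{i^*},\w_{j^*})$ depends on every dataset consulted by any stage of the pipeline, and in particular it depends on $T$ whenever $T$ is used in Stage~2, through the discrete parameter $(i^*,j^*)$.

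Next, I would invoke the standard definition (following Duda, Hart \& Stork, already cited by the author) that a training set is any data used to determine the parameters of the final system. The index pair $(i^*,j^*)$ is such a parameter --- discrete rather than continuous, but no less a parameter of the overall learner --- and it is selected precisely so as to fit $T$. Thus $T$ qualifies as training data in this standard sense, and reporting the error of $N(\a_{i^*},\w_{j^*})$ on that same $T$ is by definition reporting an error on a training set, which is the conclusion of the theorem. The argument is really one about bookkeeping: any quantity used to choose among candidate systems belongs to training, regardless of whether it was presented to a gradient step.

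The main obstacle I anticipate is rhetorical rather than technical: a reader will object that ``the continuous weights $\w_{j^*}$ were never adjusted by gradient steps on $T$,'' and so $T$ feels as though it were untouched by training. To neutralize this objection I would lean on Theorem~\ref{TM:illusion}, which shows that Post-Selection alone, without any backprop ever consulting $T$, can drive the NNWT error on $T$ arbitrarily close to zero --- including to zero exactly. Because a quantity that can be made arbitrarily small by fitting a chosen parameter to $T$ cannot simultaneously serve as an unbiased test error on $T$, the fitting interpretation is forced; consuming data through Post-Selection must be counted as training on that data, and the theorem follows.
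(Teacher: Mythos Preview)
Your proposal is correct and follows essentially the same argument as the paper: the paper's proof simply notes that ``Test on Training Set'' means the training stage involves the test set, observes that Post-Selection is part of the training stage, and concludes that because Post-Selection uses $T$ to pick the luckiest network, the training stage has involved $T$. Your version makes the same definitional move but is more careful --- explicitly treating $(i^*,j^*)$ as a discrete parameter of the overall learner and buttressing the conclusion with Theorem~\ref{TM:illusion} --- whereas the paper's own proof is a terse three-sentence paragraph that does not invoke Theorem~\ref{TM:illusion} at all.
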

\begin{proof}
Originally, the term ``Test on Training Set'' means the training stage involves the test set.  Although all trained networks did not ``see'' the test set, the Post-Selection is part of the training stage that produces the reported network and its error.  Since the Post-Selection uses the test set to pick up the luckiest network among 
all trained networks using their errors on the test set, all trained networks see the test set during the
post-selection stage.   
\end{proof}

A typical neural network architecture has a set of hyper-parameters represented by a vector $\a$, where each component corresponds a scalar parameter, such as convolution kernel sizes and stride values at each level of a deep hierarchy, the neuronal learning rate, and the neuronal learning momentum value, etc.  Let $k$ be a finite number of grid points along which such hyper-parameter vectors need to be tried,  $A = \{ \a_i \;|\; i=1, 2, ..., k\}$.  Let's give more detail to the above example.  Suppose there are 10 scalar parameters in each vector $\a_i$.  For each scalar parameter $x$ of the 10 hyper parameters, 
we need to validate the sensitivity of the system error to $x$.   With uncertainty of $x$, we estimate its initial value as the mean $\bar{x}$, positively perturbed estimate $\bar{x}+\sigma_x$ ($\sigma$ is the estimated standard deviation of $x$), and negatively perturbed estimate $\bar{x}-\sigma_x$.  If each scalar hyper parameter has three values to try in this way, there are a total of $k=3^{10}=59049$ hyper-parameter vectors to try, a very large number.   For example, in NNWT, the threshold $\bar{d}$ can be estimated by the average of nearest distance between a sample in $V$ and the nearest neighbor in $F$ and the $\sigma_d$ be estimated by the standard deviation of these nearest distances. 
 
\subsection{Test Data Available in Batch}
Ideally, test sets should not be in the possession of authors, e.g., during a blind test.  However, this is often not true since the authors may use publically available data sets that include test sets. 

\begin{definition}[Post selection in batch]
A human programmer trains multiple systems 
using the fitting set $F$.   After these systems have been trained, the experimenter post-selects a system by searching, manually or assisted by computers, among trained systems based on the batch validation set $V$ (or the batch test set $T$).  This is called Post-Selection in batch---selection of one network from multiple trained and verified (or tested) networks.
\end{definition} 

\begin{definition}[Data deletion in post-selection]
Data deletion is a misconduct during which an author deletes the performance data of some 
bad-performing networks during post-selection, not reported in the corresponding project report. 
\end{definition}
Data deletions are violations of the well-known statistical protocol of cross-validation.   In a minimally acceptable form, the distribution of the performances of all trained networks should be reported by the worst 
error, the average error and the best error among all the trained networks.   Only reporting the best error amounts to misconduct known as data deletion, deleting the performance data of all networks that 
the author does not like.

Obviously, a post-selection wastes (deletes) all trained systems except the selected one.   As we can 
predict 
\cite{WengPSUTS21}, a system from the post-selection tends to have a weaker generalization power
than the reported luckiest error indicates, as illustrated in Fig.~\ref{FG:PSUTS-1D}. 

A Post-Selection in batch can use the validation set $V$ or the test set $T$.   However, if both sets are in the 
possession of the human programmer, the difference between $V$ and $T$ almost totally vanishes under Post-Selections. 

\begin{definition}[PSUTS and PSUVS]
A Machine PSUTS is defined as follows:
 If the test set $T$ is available to the author, suppose the test error of $N(\a_i, \w_j)$ is $e_{i, j}$ on the test set $T$, find the luckiest network $N(\a_{i^*}, \w_{j^*})$ so that it reaches the error of the luckiest hyper-parameters and the luckiest initial weights from Post-Selection Using Test Set (PSUTS): 
\begin{equation}
e_{i^*,j^*} = \min_{1\le i \le k} \min_{1\le j \le n} e_{i,j}
\label{EQ:V}
\end{equation}
and report only the performance $e_{i^*,j^*}$ but not the performances of other remaining $kn-1$ trained neural networks.  PSUVS, V for validation, is similarly defined. 
\end{definition} 

Set $T$ is like set $V$ since it is available. Similarly, a human PSUTS is a procedure wherein a human selects a system from multiple trained systems 
according to $\{ e_{i,j} \}$ using also human visual inspection of internal representations of the system and their test errors.   

\subsection{Cross-Validation}

The above PSUTS is an absence of cross-validation \cite{JainDubes}.   Originally, the cross-validation is meant to mitigate an unfair luck 
in a partition of the dataset $D$ into a fitting set $F$ and a test set $T$ (empty validation set).  For example, an unfair luck 
is such that every point in the test set $T$ is well surrounded by points in the fitting set $F$.  But such a luck is hardly true in reality. 

To reduce the bias of such a luck, an $n$-fold cross-validation protocol \cite{DudaHartStork} is suggested.  

\subsection{Types of Lucks in a Neural Network}

 In a neural network, there are at least three types of lucks:

{\bf Type-1 order lucks}: The luck in a partition $P_i$ into a fitting set $F_i$ and a test set $T_i$ from a data set $D$ resulting in test error $e_i$, $i=1,2, ..., n$.   
Different partitions correspond to different luck outcomes.  This kind of outcome variation results in a variation of performance from different outcomes.   Conventionally, this type of lucks is filtered out by cross-validation (e.g., $n$-fold cross-validation) as well as reporting the deviation of $\{e_i\}$ during the cross-validation.  However, such cross-validation and deviation have hardly published for neural networks and reported.   The smaller the average $\bar{e}$ of $\{e_i\}$, the more accurate the trained network is; the smaller the standard deviation of $\{e_i\}$, the more trustable the average error $\bar{e}$ is. 

{\bf Type-2 weights lucks}: As discussed in \cite{WengPSUTS21}, weights specify the role assignment for 
all the neurons in the neural network.  A random seed value determines the initialization of  a pseudo-random number generator, which gives initial weights $\w_i$ for a neural network $N(\w_i)$, resulting in a test error $e_i$, $i=1,2, ... , n$, after training of these $n$ networks and testing on $T$.   
It is unknown that such a 
luck will be carried over to a new test set $T'$ that is outside the data set $D$ but was drawn from the same distribution of $S$.  Because a neural network might not capture the internal rules of the fitting set $F$, this paper argues that a statistical validation of the reported error should be performed by reporting the distribution of $\{e_i | i=1, 2, ... n\}$, where $e_i$ is from a different initial weight vector $\w_i$.   For example, Krizhevsky et al. \cite{Krizhevsky17} reported 60 million parameters, mostly in $\w_i$ but only the luckiest  $e_i$ was reported.  The smaller the average $\bar{e}$ of $\{e_i\}$, the more accurate the trained network is; the smaller the standard deviation 
$\sigma$ of $\{e_i\}$, the less sensitive the trained neural network is to the initial weights and thus the accuracy is more trustable for real applications.  For i.i.d. (identically independently distributed) errors, 
we can expect that doubling the number $n$ will reduce the expected variance of $\bar{e}$ by a factor $1/\sqrt{2}$., since the expected variance of $n$ random numbers is about $\sigma^2/n$.
 
{\bf Type-3 hyper-parameter lucks}: Each hyper-parameter vector $\a_j$ of the neural network gives an error $e_j$, $j=1,2, ..., k$.   Because such a luck of $\a_j$ might not capture the internal rules of the fitting set $F_j$, this paper argues that a statistical validation of the reported error estimate should be performed and the distribution of $\{e_j\}$ be reported.   
 In our above example, the number of distinct hyper-parameter vectors to be tried is $k=3^{10}=59049$.   The smaller the average $\bar{e}$ of $\{e_j\}$, the more accurate the trained network is; the smaller the sample variance of $\{e_j\}$, the more trustable $\bar{e}$ is, namely, 
the average error $\bar{e}$ is less sensitive to the initial hyper-parameters of the network.    For example, 
the threshold $d$ of the nearest neighbor classifier in Definition~\ref{DF:NN} might result in a large deviation. A good way is to reduce the manual selection nature of such hyper-parameters.  For example, all hyper-parameters are adaptively adjusted from the initial hyper-parameters that are further automatically computed 
from system resources, e.g., the resolution of a camera, the total number of available neurons, and the firing age of each neuron \cite{WengLCA09}. 

For notation clarity in the discussion that follows, index $j$ is used in Type 3 to distinguish index $i$ in type 2, but the above three types of lucks are all different. 

Let us discuss the case of a developmental network, such as Cresceptron \cite{WengCresIJCV97} and DN \cite{WengIJIS15}.  Type-1 cross-validation is not needed because of reporting of a lifetime error.
In other words, errors of all new tests in each life are taken into account throughout the lifetime.
Type-2 validation is not needed because all different random weights $\w_i$ leads to the function-equivalent  neural network under certain conditions.  For example, in top-$k$ competition, with $k=1$ different 
$\w_i$ give the exactly the same neural network and with $k>1$ different 
$\w_i$ give almost the same neural network.   The distribution of lifetime errors $\{e_i\}$ is expected to have a negligible deviation across different initial weight vectors $\w_i$, given the same Four Learning Conditions.
Type-3 validation might be useful but is expected to be negligible since the most obvious parameters such as learning rate and momentum of learning rate is automatically and optimally determined by each neuron, 
not handcrafted, as in LCA \cite{WengLCA09}.   The synaptic maintenance automatically adjusts all receptive fields \cite{Wang11,GuoIJCNN14} so that the neural network performance is not sensitive to the initial hyper-parameters. 
 
In contract, a batch-trained neural network typically uses a Post-Selection to pick the luckiest network without cross-validation for either of the above three types of lucks, e.g., in ImageNet Contest \cite{Russakovsky15}.  Namely, errors occurred during batch training of the network before the network is finalized and how long the training takes are not reported.   Many researchers have claimed error-backprop ``works'' without providing much-needed three types of validations.   This seems not true since \cite{ZhengCVVT16} shows a huge difference between the luckiest CNN with error-backprop and the optimal DN.

We also need to be aware of another protocol flaw:  Random seeds and hyper parameters are all coupled. 
Under such a coupling, Type 2 validation seems unnecessary with $n=1$ but the search of the luckiest weights is embedded into the search for the luckiest hyper-parameter vector.
where each hyper parameter vector uses a different seed.  

Since a PSUTS procedure picks the best system based on the errors on the test set (like a validation set), the resulting system does not do well on new test sets because doing well on a validation set does not guarantee doing well on an open test set.  See Fig.~\ref{FG:PSUTS-1D}.  Typically, due to a very large number of samples, availability of test sets and unavailability of open test sets in a properly managed contest, Post-Selections cause the reported error to be smaller than an open test error rate.   (However, in Table 2 of \cite{Krizhevsky17}, the test error rate is smaller than the validation error for 7CNNs, causing a reasonable suspicion that PSUTS could be used in addition to PSUVS.)  

 
\subsection{The Luckiest Network from a Validation Set: Data Deletion}

Many people may ask:  Are there any technical flaws in at least PSUVS, since it does not use the test sets?  Any post-selection is technically flawed and results in misleading results, including both PSUVS and PSUTS, because of the data deletion misconduct.  

In general, Type-1 cross-validation is to filter out lucks in data partition that a typical user does not have during a deployment of the method.   Namely, it is a severe technical and protocol flaw in reporting only the luckiest network, regardless the post-selection uses validation sets or test sets.   At least the average
error over Post-Selections must be reported. 

This conclusion has a great impact on evolutional methods that often report only the luckiest network, instead of those of all networks in a population.
Namely, the performances of all individual networks in an evolutionary generation should be reported. 
Furthermore, a reasonably disjoint test set must be used to evaluate the generalization of the luckiest network.

If the test set $T$ is in the possession of the author, which seems to be true for almost all so-called ``deep learning'' publications other than open competitions, we define machine PSUTS as a Post-Selection process where test set is used, in addition to the validation set. 

Some researchers have claimed that the test data was ``unseen'' by trained systems when they were tested, but the network selector has seen their performances when they see the test set.  In other words, 
the Post-Selection stage belongs to the training phase---training the Post-Selector. 

Weng \cite{WengPSUTS21} discussed also human PSUTS where it is a human that does Post-Selection.  The subsection below discusses human PSUTS on the fly, not in batch.  

Weng \cite{WengPSUTS21} also discussed why error-backprop needs machine PSUTS and human PSUTS.

\subsection{Open Test Data Arriving on the Fly}

\begin{definition}[PSUTS On The Fly]
During an open competition, the participating $m>1$ luckiest networks meet new open test data that arrive on the fly.  One or more humans conduct an additional round of Post-Selections on the fly during the competition using human PSUTS from the $m$ machine outputs for some actions.  Here, all human selected actions across the entire game do not even have to be consistently from a single network. 
\end{definition} 

For example, $\x(t)$ is the current board configuration in a Go game, treated as a game state 
at player time $t$.  Human experts (behind the scene or on the scene), based on the output actions from the $m$ luckiest networks, manually select a network's output as the next action $\z(t+1)$.   Of course, this is not a fair competition, because the so-called ``human player side'' has only a single human who is not allowed 
to use any computer but the so-called ``machine player side'' has one or more humans who have assistance from computers.

Weng \cite{WengPSUTS-ICDL21} discussed how the DN algorithm trains only a single network that is always optimal in the sense of maximum likelihood, conditioned on the Four Learning Conditions.  In this sense, the DN is free from local minima.

\section{Conclusions}
\label{SE:conclusions}

We used intuitive terms but formal ways to discuss Post-Selections.   
So-called ``Deep Learning'' involves two misconducts, ``data deletion'' and ``test on training data''.  Because it is without a test stage, ``Deep Learning'' is not generalizable.  Performance data from so-called ``deep learning'' are misleading without explicit exclusion of such flaws in their experimental protocol.  Such flawed protocols are tempting to those published papers where the test sets are in the possession of the authors and also to open-competitions where human experts are not explicitly disallowed to interact with the ``machine player'' on the fly.  Evidence of such misconduct is referred to Weng et al. v. NSF et al. MWDC  1:22-cv-998.

Public and media have gained an impression that deep learning has approached or even ``sometimes exceeded'' human level performance on certain tasks.  For example, the image classification errors from a static image set were compared with those of humans \cite[A2, p242]{Russakovsky15}) and the work is laudable.  However, this paper raises Post-Selections, which seem to question such claims since a real human does not have
the luxury of Post-Selections.  The author hopes that the exposure of Post-Selections is beneficial to AI credibility and the future healthy development of AI, especially with the concepts of developmental errors and the framework of ML-optimal lifetime learning for invariant concepts under the Four Learning Conditions.   Some researchers have raised that it seems that those who wan a competition were those who have more computational resources and manpower at their disposal.   The new developmental error metrics under the Four Learning Conditions hopefully encourages 
future AI competitions to compare methods under the same Four Learning Conditions.  Considering DN as a much-simplified model for a biological machine, it seems not baseless to guess that 
each biological brain is probably ML-optimal (of course in a much richer sense) across lifetime,
e.g., due to the pressure to compete at every age.   The Four Learning Conditions explicitly include other factors that greatly affect machine learning performances such as learning framework (e.g., task-nonspecificity, incremental learning, the robot bodies), learning 
experiences and computational resources.   The analysis that any ``big data'' sets are nonsalable does not
mean that we should not create, use and share data sets.   Instead, we need to pay attention to the fundamental limitations of any static data sets, regardless how large their apparent sizes are.

\balance
\bibliographystyle{ACM-Reference-Format}


%
%
%
%

\end{document}